\newtheorem{theorem}{Theorem}[section]
\newtheorem{lemma}[theorem]{Lemma}
\theoremstyle{definition}
\newtheorem{definition}{Definition}[section]
\algnewcommand\Or{\textbf{or} }
\algnewcommand\Select{\textbf{select} }
\algnewcommand\Replace{\textbf{replace} }
\algnewcommand\Rep{\textbf{repeat} }
\title{Jump Point Search Pathfinding in 4-connected Grids}
\author{Johannes Baum}
\affil{Google}
\affil{Email: jobaum@google.com}
\begin{document}
\maketitle

\begin{abstract}
This work introduces JPS4, a novel pathfinding algorithm for 4-connected grid maps. JPS4 builds upon the Jump Point Search (JPS8) algorithm, originally designed for 8-connected environments. To achieve efficient pathfinding on 4-connected grids, JPS4 employs a canonical ordering and a successor function that enable online graph pruning. This reduces the search space by minimizing unnecessary node expansions.

The core concept of JPS4 as well as JPS8 lies in the utilization of jump points. Strategically placed at obstacle corners, jump points prevent the search from overlooking crucial sections of the state space. They essentially reinitialize the canonical ordering, allowing exploration beyond obstacles. This mechanism ensures JPS4 finds optimal paths even in complex environments.

The paper further explores the optimality of JPS4 and compares its performance against the established A* algorithm on various grid maps. Benchmarking results demonstrate that JPS4 significantly outperforms A* in scenarios with high obstacle density. However, A* remains more efficient on open maps. Overall, JPS4 presents itself as a promising alternative to A* for pathfinding on 4-connected grids, particularly applicable in video game development.
\end{abstract}

\section{Introduction}
Pathfinding on a 4-connected grid map is a common problem in video games \cite{botea_et_al}\cite{Harabor_Botea_2010} and robotics \cite{Hu_Harabor_Gange_Stuckey_Sturtevant_2021}. The Jump Point Search (JPS8) algorithm \cite{harabor11} is an adapted version of A* \cite{A_star}, adding a specific canonical ordering and successor function \cite{canonical_orderings}. It improves the performance of traditional A* on 8-connected uniform cost grid maps without any precomputations.
This work adapts the original JPS8 algorithm to work on 4-connected uniform cost grid maps. JPS8 prunes the graph online following specified pruning rules that define the canonical ordering. We adapt the pruning rules to produce a horizontal-first canonical ordering that can be applied to 4-connected grid maps.

\section{Related Work}

Optimizations of best-first search pathfinding algorithms like A* usually improve the performance by adapting the heuristic, introducing hierarchical abstractions or eliminating path symmetries \cite{botea_et_al}.

Hierarchical pathfinding algorithms like HPA* \cite{hpa} improve the performance while sacrificing optimality. 

One approach to break path symmetries on 4-connected grid maps identifies rectangular rooms without obstacles and prunes all interior nodes \cite{Harabor_Botea_2010}. This is done offline in a precomputation step.

The original Jump Point Search (JPS8) algorithm \cite{harabor11} improves traditional A* on 8-connected uniform cost grid maps without any precomputations.

Hu et al. provide a canonical ordering (Vertical-then-Horizontal-then-Wait) for JPS8 that works on 4-connected grid maps with temporal obstacles.

\section{Notation and Terminology}

We refer to the original algorithm that was proposed in \cite{harabor11} as JPS8 (Jump Point Search). While JPS8 allows 8 movement directions (left, right, up, down, up-left, up-right, down-left, down-right), our proposed algorithm adapts JPS8 to allow only 4 (cardinal) movement directions (left, right, up, down).
Therefore we are calling our proposed algorithm JPS4.

As in the original JPS8 the algorithm works on undirected uniform-cost grid maps.
\\
\\
\textbf{Directions}:
$D = \{up, down, left, right\}$ denotes the four allowed cardinal movement directions. $H = \{left, right\} \subset D$ is the set of \textbf{horizontal} directions.
$V = \{up,down\} \subset D$ is the set of \textbf{vertical} directions.
\\
\\
\textbf{Nodes}: Let $N$ denote the set of all nodes in our grid. Each node is either traversable or not and a move from a traversable node to one of its neighbors always has a cost of 1. Moves from or to non-traversable nodes are not allowed. 
$neighbors: N \rightarrow \mathcal{P}(N)$
 provides all traversable neighbors in directions $D$. 
\\
\\

As in \cite{harabor11}, we write $y = x+kd$ when node $y \in N$ can be reached
by taking $k \in \mathbb{N}$ unit moves from node $x \in N$ in direction $d \in D$.
\\
\\
\textbf{Paths}:
We are using the path notation from \cite{harabor11}:
A path
$\pi = \langle x_0, x_1, \dots , x_k \rangle$
is a cycle-free ordered walk starting at node $x_0$ and ending at $x_k$. The notation $\pi \setminus x$ denotes a path that does not contain node $x \in N$.
The set of all such paths is called $\Pi$.
$len: \Pi \rightarrow \mathbb{N}$
 is a function that defines the length of a path. $direction: N \times N \rightarrow D$
 provides the direction moving from one node to its neighbor.

\section{Algorithm}
A key concept of the original version of Jump Point Search \cite{harabor11} as well as this adapted version is a so called $jump$ $point$. Jump points help to leverage path symmetries in order to reduce the total amount of nodes that need to be expanded by the search algorithm.

While classical A* expands all neighbors for a node, we look ahead in the current movement direction for vertical movements only. This means that when moving up, we only check the neighbor in direction $up$ and when moving down only check the neighbor in direction $down$. This works fine unless we see an obstacle. In this case we also have to expand in other directions. We formalize these ideas in later sections.

\subsection{Canonical Ordering}

On open maps without obstacles there are many optimal paths from one position to another. Consider figure \ref{fig:path-symmetry}. From point A to B there are plenty of optimal paths. In that case each path with 4 right movements and 4 up movements leads to target B. The order of the right and up movements does not matter. This redundancy among the optimal paths is called path symmetries. A canonical ordering is a total ordering over paths which can eliminate many of the redundant paths \cite{canonical_orderings}. We chose the horizontal first canonical ordering, which prefers horizontal movements before vertical ones. With this canonical ordering the search space is no longer a graph but a tree \cite{canonical_orderings} as shown in figure \ref{fig:can-order-tree}.
In figure \ref{fig:can-order-path} this canonical ordering provides the shown path, that first contains all horizontal movements and then all vertical ones.

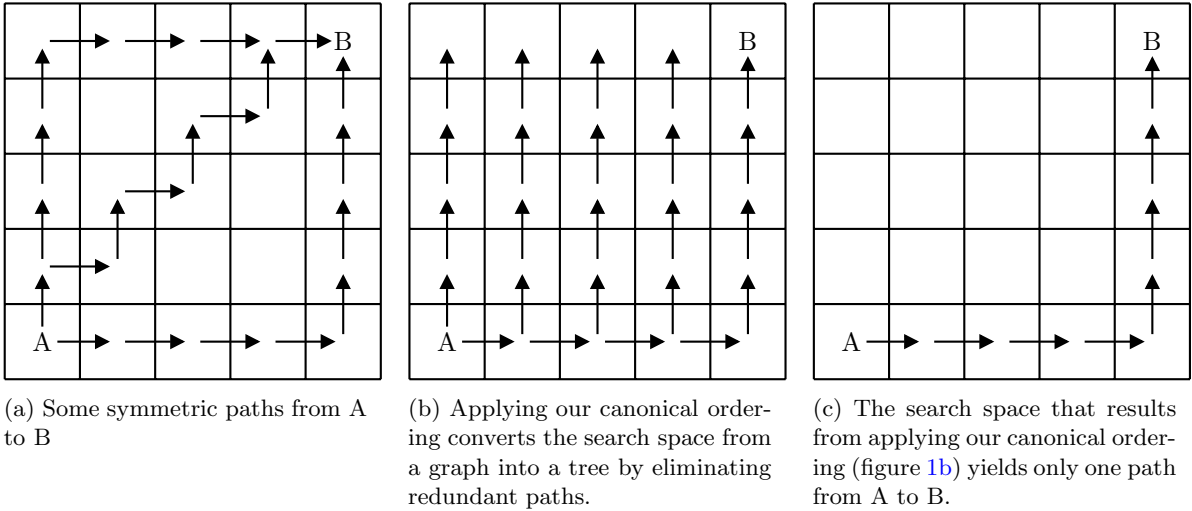
\begin{figure}
    \begin{subfigure}[t]{0.31\textwidth}
        \vskip 0pt
        \begin{tikzpicture}[scale=1, thick]
          \begin{scope}
            \draw (0, 0) grid (5, 5);
            \node at (0.5,0.5) {A};
            \node at (4.5,4.5) {B};
            \draw[->, arrows={-Triangle[length=0.2cm]}] (0.7,0.5) -- (1.4,0.5);
            \draw[->, arrows={-Triangle[length=0.2cm]}] (1.6,0.5) -- (2.4,0.5);
            \draw[->, arrows={-Triangle[length=0.2cm]}] (2.6,0.5) -- (3.4,0.5);
            \draw[->, arrows={-Triangle[length=0.2cm]}] (3.6,0.5) -- (4.4,0.5);
            \draw[->, arrows={-Triangle[length=0.2cm]}] (0.5,0.7) -- (0.5,1.4);
            \draw[->, arrows={-Triangle[length=0.2cm]}] (0.5,1.6) -- (0.5,2.4);
            \draw[->, arrows={-Triangle[length=0.2cm]}] (0.5,2.6) -- (0.5,3.4);
            \draw[->, arrows={-Triangle[length=0.2cm]}] (0.5,3.6) -- (0.5,4.4);
            \draw[->, arrows={-Triangle[length=0.2cm]}] (0.6,4.5) -- (1.4,4.5);
            \draw[->, arrows={-Triangle[length=0.2cm]}] (1.6,4.5) -- (2.4,4.5);
            \draw[->, arrows={-Triangle[length=0.2cm]}] (2.6,4.5) -- (3.4,4.5);
            \draw[->, arrows={-Triangle[length=0.2cm]}] (3.6,4.5) -- (4.35,4.5);
            \draw[->, arrows={-Triangle[length=0.2cm]}] (4.5,0.6) -- (4.5,1.4);
            \draw[->, arrows={-Triangle[length=0.2cm]}] (4.5,1.6) -- (4.5,2.4);
            \draw[->, arrows={-Triangle[length=0.2cm]}] (4.5,2.6) -- (4.5,3.4);
            \draw[->, arrows={-Triangle[length=0.2cm]}] (4.5,3.6) -- (4.5,4.3);
            \draw[->, arrows={-Triangle[length=0.2cm]}] (0.6,1.5) -- (1.4,1.5);
            \draw[->, arrows={-Triangle[length=0.2cm]}] (1.6,2.5) -- (2.4,2.5);
            \draw[->, arrows={-Triangle[length=0.2cm]}] (2.6,3.5) -- (3.4,3.5);
            \draw[->, arrows={-Triangle[length=0.2cm]}] (1.5,1.6) -- (1.5,2.4);
            \draw[->, arrows={-Triangle[length=0.2cm]}] (2.5,2.6) -- (2.5,3.4);
            \draw[->, arrows={-Triangle[length=0.2cm]}] (3.5,3.6) -- (3.5,4.4);
          \end{scope}
        \end{tikzpicture}
        \caption{Some symmetric paths from A to B}
        \label{fig:path-symmetry}
    \end{subfigure}
    \hspace*{\fill}
    \begin{subfigure}[t]{0.31\textwidth}
        \vskip 0pt
        \begin{tikzpicture}[scale=1, thick]
          \begin{scope}
            \draw (0, 0) grid (5, 5);
            \node at (0.5,0.5) {A};
            \node at (4.5,4.5) {B};
            \draw[->, arrows={-Triangle[length=0.2cm]}] (0.7,0.5) -- (1.4,0.5);
            \draw[->, arrows={-Triangle[length=0.2cm]}] (1.6,0.5) -- (2.4,0.5);
            \draw[->, arrows={-Triangle[length=0.2cm]}] (2.6,0.5) -- (3.4,0.5);
            \draw[->, arrows={-Triangle[length=0.2cm]}] (3.6,0.5) -- (4.4,0.5);
            
            \draw[->, arrows={-Triangle[length=0.2cm]}] (0.5,0.7) -- (0.5,1.4);
            \draw[->, arrows={-Triangle[length=0.2cm]}] (0.5,1.6) -- (0.5,2.4);
            \draw[->, arrows={-Triangle[length=0.2cm]}] (0.5,2.6) -- (0.5,3.4);
            \draw[->, arrows={-Triangle[length=0.2cm]}] (0.5,3.6) -- (0.5,4.4);
            \draw[->, arrows={-Triangle[length=0.2cm]}] (4.5,0.6) -- (4.5,1.4);
            \draw[->, arrows={-Triangle[length=0.2cm]}] (4.5,1.6) -- (4.5,2.4);
            \draw[->, arrows={-Triangle[length=0.2cm]}] (4.5,2.6) -- (4.5,3.4);
            \draw[->, arrows={-Triangle[length=0.2cm]}] (4.5,3.6) -- (4.5,4.3);
            
            \draw[->, arrows={-Triangle[length=0.2cm]}] (1.5,0.6) -- (1.5,1.4);
            \draw[->, arrows={-Triangle[length=0.2cm]}] (1.5,1.6) -- (1.5,2.4);
            \draw[->, arrows={-Triangle[length=0.2cm]}] (1.5,2.6) -- (1.5,3.4);
            \draw[->, arrows={-Triangle[length=0.2cm]}] (1.5,3.6) -- (1.5,4.4);
            
            \draw[->, arrows={-Triangle[length=0.2cm]}] (2.5,0.6) -- (2.5,1.4);
            \draw[->, arrows={-Triangle[length=0.2cm]}] (2.5,1.6) -- (2.5,2.4);
            \draw[->, arrows={-Triangle[length=0.2cm]}] (2.5,2.6) -- (2.5,3.4);
            \draw[->, arrows={-Triangle[length=0.2cm]}] (2.5,3.6) -- (2.5,4.4);
            
            \draw[->, arrows={-Triangle[length=0.2cm]}] (3.5,0.6) -- (3.5,1.4);
            \draw[->, arrows={-Triangle[length=0.2cm]}] (3.5,1.6) -- (3.5,2.4);
            \draw[->, arrows={-Triangle[length=0.2cm]}] (3.5,2.6) -- (3.5,3.4);
            \draw[->, arrows={-Triangle[length=0.2cm]}] (3.5,3.6) -- (3.5,4.4);
          \end{scope}
        \end{tikzpicture}
        \caption{Applying our canonical ordering converts the search space from a graph into a tree by eliminating redundant paths.}
        \label{fig:can-order-tree}
    \end{subfigure}
    \hspace*{\fill}
    \begin{subfigure}[t]{0.31\textwidth}
        \vskip 0pt
        \begin{tikzpicture}[scale=1, thick]
          \begin{scope}
            \draw (0, 0) grid (5, 5);
            \node at (0.5,0.5) {A};
            \node at (4.5,4.5) {B};
            \draw[->, arrows={-Triangle[length=0.2cm]}] (0.7,0.5) -- (1.4,0.5);
            \draw[->, arrows={-Triangle[length=0.2cm]}] (1.6,0.5) -- (2.4,0.5);
            \draw[->, arrows={-Triangle[length=0.2cm]}] (2.6,0.5) -- (3.4,0.5);
            \draw[->, arrows={-Triangle[length=0.2cm]}] (3.6,0.5) -- (4.4,0.5);
            
            \draw[->, arrows={-Triangle[length=0.2cm]}] (4.5,0.6) -- (4.5,1.4);
            \draw[->, arrows={-Triangle[length=0.2cm]}] (4.5,1.6) -- (4.5,2.4);
            \draw[->, arrows={-Triangle[length=0.2cm]}] (4.5,2.6) -- (4.5,3.4);
            \draw[->, arrows={-Triangle[length=0.2cm]}] (4.5,3.6) -- (4.5,4.3);
          \end{scope}
        \end{tikzpicture}
        \caption{The search space that results from applying our canonical ordering (figure \ref{fig:can-order-tree}) yields only one path from A to B.}
        \label{fig:can-order-path}
    \end{subfigure}
    \caption{Elimination of symmetric paths in obstacle-free maps via canonical ordering.}
\end{figure}

\subsection{Neighbor Pruning Rules}
In the previous section, we said that we are not going to expand every neighbor for vertical movements.
To achieve this, we prune the neighbors in a way that does not influence the optimality of the resulting path. By pruning, we apply our canonical ordering to the search space.

In order to decide which neighbors of a node $x$ to prune, we need to know the parent node (the adjacent node we are coming from). So, pruning depends on the movement direction. We denote the parent node of $x$ by $p$: $N \rightharpoonup N$ where $\forall x \in N$: $p(x)$ is not defined or $p(x) \in neighbors(x) $.
If $x$ is the start node and there exists no $p(x)$, nothing is pruned.

\begin{description}
   \item[Horizontal:] We don't prune any nodes.
   \item[Vertical:] Ignoring obstacles, we prune any node $n \in neighbors(x)$ that satisfies $len( \langle p(x), \dots , n \rangle \setminus x ) \leq len( \langle p(x), x, n \rangle )$. That means we prune any neighbor of $x$ that can be reached on a path that does not include $x$ and is not longer than any path that does include $x$. As a result, the left and right neighbors of $x$ are pruned as demonstrated in figure \ref{fig:nat-neighs-vertical}.
\end{description}

We define two kinds of neighbors: natural neighbors and forced neighbors. Natural neighbors are those that remain after pruning. Forced neighbors are those that would usually be pruned, but they are not because there is an obstacle (non-traversable node) around.

\begin{definition}[Natural Neighbors]\label{def:natural}
A node $n \in neighbors(x)$ is called a natural neighbor if it is not pruned according to the above pruning rules.
\end{definition}

\begin{definition}[Forced Neighbors]\label{def:forced}
A node $n \in neighbors(x)$ is forced if:
\begin{enumerate}
\item $n$ is not a natural neighbor of $x$
\item $len( \langle p(x), \ldots , n \rangle \setminus x )  > len( \langle $p(x)$, x, n \rangle )$
\end{enumerate}
\end{definition}

Applying the pruning rules yields the natural neighbors shown in figure \ref{fig:nat-neighs-vertical} for a node $x$ when coming from parent $p$ in a vertical movement. This means that there is only one natural neighbor for vertical movements (the one in the same direction that we are moving towards).

\begin{figure}
    \centering
    \begin{subfigure}[t]{0.40\textwidth}
        \centering
        \vskip 0pt
        \begin{tikzpicture}[scale=1, thick]
          \begin{scope}
            \fill[lightgray] (0,0) -- +(0, 1) -- +(1, 1) -- +(1,0) -- cycle;
            \fill[lightgray] (0,2) -- +(0, 1) -- +(1, 1) -- +(1,0) -- cycle;
            \fill[lightgray] (2,2) -- +(0, 1) -- +(1, 1) -- +(1,0) -- cycle;
            \fill[lightgray] (2,0) -- +(0, 1) -- +(1, 1) -- +(1,0) -- cycle;
            \fill[lightgray] (2,1) -- +(0, 1) -- +(1, 1) -- +(1,0) -- cycle;
            \fill[lightgray] (1,0) -- +(0, 1) -- +(1, 1) -- +(1,0) -- cycle;
            \fill[lightgray] (1,1) -- +(0, 1) -- +(1, 1) -- +(1,0) -- cycle;
            \fill[lightgray] (0,1) -- +(0, 1) -- +(1, 1) -- +(1,0) -- cycle;
            \draw (0, 0) grid (3, 3);
            \node at (1.5,0.5) {p};
            \node at (1.5,1.5) {x};
            \node at (1.5,2.5) {nn};
            \draw[->, arrows={-Triangle[length=0.2cm]}] (1.5,0.7) -- (1.5,1.4);
          \end{scope}
        \end{tikzpicture}
        \caption{There is only one natural neighbor for a vertical movement.}
        \label{fig:nat-neighs-vertical}
    \end{subfigure}
    \hspace*{2cm}   
    \begin{subfigure}[t]{0.40\textwidth}
        \centering
        \vskip 0pt
        \begin{tikzpicture}[scale=1, thick]
          \begin{scope}
            \fill[lightgray] (0,0) -- +(0, 1) -- +(1, 1) -- +(1,0) -- cycle;
            \fill[lightgray] (0,2) -- +(0, 1) -- +(1, 1) -- +(1,0) -- cycle;
            \fill[lightgray] (2,2) -- +(0, 1) -- +(1, 1) -- +(1,0) -- cycle;
            \fill[lightgray] (2,0) -- +(0, 1) -- +(1, 1) -- +(1,0) -- cycle;
            \fill[lightgray] (1,0) -- +(0, 1) -- +(1, 1) -- +(1,0) -- cycle;
            \fill[lightgray] (1,1) -- +(0, 1) -- +(1, 1) -- +(1,0) -- cycle;
            \draw (0, 0) grid (3, 3);
            \node at (1.5,0.5) {p};
            \node at (1.5,1.5) {x};
            \node at (1.5,2.5) {nn};
            \node at (0.5,1.5) {fn?};
            \node at (2.5,1.5) {fn?};
            \draw[->, arrows={-Triangle[length=0.2cm]}] (1.5,0.7) -- (1.5,1.4);
          \end{scope}
        \end{tikzpicture}
        \caption{All neighbors that are not natural neighbors in a vertical movement are potentially forced neighbors.}
        \label{fig:pot-forced-vertical}
    \end{subfigure}

            \vskip 10pt
    \begin{subfigure}[t]{1\textwidth}
        \centering
        \hspace*{\fill}
        \begin{subfigure}[t]{0.30\textwidth}
            \centering
            \vskip 0pt
            \begin{tikzpicture}[scale=1, thick]
              \begin{scope}
                \fill[black] (0,0) -- +(0, 1) -- +(1, 1) -- +(1,0) -- cycle;
                \fill[lightgray] (0,2) -- +(0, 1) -- +(1, 1) -- +(1,0) -- cycle;
                \fill[lightgray] (2,2) -- +(0, 1) -- +(1, 1) -- +(1,0) -- cycle;
                \fill[lightgray] (2,0) -- +(0, 1) -- +(1, 1) -- +(1,0) -- cycle;
                \fill[lightgray] (1,0) -- +(0, 1) -- +(1, 1) -- +(1,0) -- cycle;
                \fill[lightgray] (1,1) -- +(0, 1) -- +(1, 1) -- +(1,0) -- cycle;
                \fill[lightgray] (2,1) -- +(0, 1) -- +(1, 1) -- +(1,0) -- cycle;
                \draw (0, 0) grid (3, 3);
                \node at (1.5,0.5) {p};
                \node at (1.5,1.5) {x};
                \node at (1.5,2.5) {nn};
                \node at (0.5,1.5) {fn};
                \draw[->, arrows={-Triangle[length=0.2cm]}] (1.5,0.7) -- (1.5,1.4);
              \end{scope}
            \end{tikzpicture}
        \end{subfigure}
        \hspace*{\fill}
        \begin{subfigure}[t]{0.30\textwidth}
            \centering
            \vskip 0pt
            \begin{tikzpicture}[scale=1, thick]
              \begin{scope}
                \fill[black] (2,0) -- +(0, 1) -- +(1, 1) -- +(1,0) -- cycle;
                \fill[lightgray] (0,2) -- +(0, 1) -- +(1, 1) -- +(1,0) -- cycle;
                \fill[lightgray] (2,2) -- +(0, 1) -- +(1, 1) -- +(1,0) -- cycle;
                \fill[lightgray] (1,0) -- +(0, 1) -- +(1, 1) -- +(1,0) -- cycle;
                \fill[lightgray] (1,1) -- +(0, 1) -- +(1, 1) -- +(1,0) -- cycle;
                \fill[lightgray] (0,1) -- +(0, 1) -- +(1, 1) -- +(1,0) -- cycle;
                \fill[lightgray] (0,0) -- +(0, 1) -- +(1, 1) -- +(1,0) -- cycle;
                \draw (0, 0) grid (3, 3);
                \node at (1.5,0.5) {p};
                \node at (1.5,1.5) {x};
                \node at (1.5,2.5) {nn};
                \node at (2.5,1.5) {fn};
                \draw[->, arrows={-Triangle[length=0.2cm]}] (1.5,0.7) -- (1.5,1.4);
              \end{scope}
            \end{tikzpicture}
        \end{subfigure}
        \hspace*{\fill}
        \begin{subfigure}[t]{0.30\textwidth}
            \centering
            \vskip 0pt
            \begin{tikzpicture}[scale=1, thick]
              \begin{scope}
                \fill[black] (0,0) -- +(0, 1) -- +(1, 1) -- +(1,0) -- cycle;
                \fill[black] (2,0) -- +(0, 1) -- +(1, 1) -- +(1,0) -- cycle;
                \fill[lightgray] (0,2) -- +(0, 1) -- +(1, 1) -- +(1,0) -- cycle;
                \fill[lightgray] (2,2) -- +(0, 1) -- +(1, 1) -- +(1,0) -- cycle;
                \fill[lightgray] (1,0) -- +(0, 1) -- +(1, 1) -- +(1,0) -- cycle;
                \fill[lightgray] (1,1) -- +(0, 1) -- +(1, 1) -- +(1,0) -- cycle;
                \draw (0, 0) grid (3, 3);
                \node at (1.5,0.5) {p};
                \node at (1.5,1.5) {x};
                \node at (1.5,2.5) {nn};
                \node at (2.5,1.5) {fn};
                \node at (0.5,1.5) {fn};
                \draw[->, arrows={-Triangle[length=0.2cm]}] (1.5,0.7) -- (1.5,1.4);
              \end{scope}
            \end{tikzpicture}
        \end{subfigure}%
        \caption{Natural and forced neighbors for vertical movements in case of obstacles}
        \label{fig:forced-obstacle-vertical}
    \end{subfigure}%

    \caption{Natural and forced neighbors for vertical movements (black square:  obstacle, p: parent node, x: current node, nn: natural neighbor, fn?: potentially forced neighbor, fn: forced neighbor)}
\end{figure}

Horizontal movements never produce forced neighbors. For vertical movements we need to check the pruning condition for all neighbors that are not natural neighbors. Figure \ref{fig:pot-forced-vertical} shows all potentially forced neighbors for vertical movements.

If there is no obstacle, none of these potentially forced neighbors are actual forced neighbors. Figure \ref{fig:forced-obstacle-vertical} shows how obstacles produce forced neighbors.

Algorithms \ref{alg:prune}, \ref{alg:getnn} and \ref{alg:forcedn} formalize the pruning rules using the above definitions.

\begin{algorithm}
\caption{Function $prune$}\label{alg:prune}
\begin{algorithmic}[1]
\Require $p$: parent node, $x$: current node
\If{$p = null$}
    \State \Return $neighbors(x)$
\EndIf
\State \Return $naturalNeighbors(p,x) \bigcup forcedNeighbors(p, x)$ 
\end{algorithmic}
\end{algorithm}

\begin{algorithm}
\caption{Function $naturalNeighbors$}\label{alg:getnn}
\begin{algorithmic}[1]
\Require $p$: parent node, $x$: current node
\If{$direction(p,x) \in H$}
    \State \Return $neighbors(x) \setminus \{p\}$
\EndIf
\State \Return $\{ x + direction(p, x) \}$
\end{algorithmic}
\end{algorithm}

\begin{algorithm}
\caption{Function $forcedNeighbors$}\label{alg:forcedn}
\begin{algorithmic}[1]
\Require $p$: parent node, $x$: current node
\State $forcedNeighbors \gets \emptyset$
\ForAll{$n \in neighbors(x) \setminus (naturalNeighbors(p, x) \bigcup \{p\})$}
    \If{$len( \langle p, x, n \rangle ) < len( \langle p, \ldots , n \rangle \setminus x )$}
        \State add $n$ to $forcedNeighbors$ 
    \EndIf
\EndFor
\State \Return $forcedNeighbors$
\end{algorithmic}
\end{algorithm}

\subsection{Jump Points}

Since the canonical ordering only works if there are no obstacles, JPS8 introduces so called jump points. They are placed on the corners of obstacles so no parts of the state space are missed.

Figure \ref{fig:can-order-obstacle} shows how a canonical ordering can fail to produce a path to the target because of an obstacle.

Upon encountering an obstacle, the canonical path defines a jump point at the obstacle's corner. This point serves as a reset for the ordering, enabling exploration of regions beyond the obstacle, which would otherwise be inaccessible. This is shown in figure \ref{fig:can-order-jp}. These added successors at
the jump points are the forced neighbors. 

\begin{figure}
    \centering
    \begin{subfigure}[t]{0.31\textwidth}
        \vskip 0pt
        \begin{tikzpicture}[scale=1, thick]
          \begin{scope}
            \draw (0, 0) grid (5, 5);
            \node at (0.5,0.5) {A};
            \node at (2.5,3.5) {B};
            \fill[black] (2,1) -- +(0, 1) -- +(1, 1) -- +(1,0) -- cycle;
            \draw[->, arrows={-Triangle[length=0.2cm]}] (0.7,0.5) -- (1.4,0.5);
            \draw[->, arrows={-Triangle[length=0.2cm]}] (1.6,0.5) -- (2.4,0.5);
            \draw[->, arrows={-Triangle[length=0.2cm]}] (2.6,0.5) -- (3.4,0.5);
            \draw[->, arrows={-Triangle[length=0.2cm]}] (3.6,0.5) -- (4.4,0.5);
            
            \draw[->, arrows={-Triangle[length=0.2cm]}] (0.5,0.7) -- (0.5,1.4);
            \draw[->, arrows={-Triangle[length=0.2cm]}] (0.5,1.6) -- (0.5,2.4);
            \draw[->, arrows={-Triangle[length=0.2cm]}] (0.5,2.6) -- (0.5,3.4);
            \draw[->, arrows={-Triangle[length=0.2cm]}] (0.5,3.6) -- (0.5,4.4);
            \draw[->, arrows={-Triangle[length=0.2cm]}] (4.5,0.6) -- (4.5,1.4);
            \draw[->, arrows={-Triangle[length=0.2cm]}] (4.5,1.6) -- (4.5,2.4);
            \draw[->, arrows={-Triangle[length=0.2cm]}] (4.5,2.6) -- (4.5,3.4);
            \draw[->, arrows={-Triangle[length=0.2cm]}] (4.5,3.6) -- (4.5,4.4);
            
            \draw[->, arrows={-Triangle[length=0.2cm]}] (1.5,0.6) -- (1.5,1.4);
            \draw[->, arrows={-Triangle[length=0.2cm]}] (1.5,1.6) -- (1.5,2.4);
            \draw[->, arrows={-Triangle[length=0.2cm]}] (1.5,2.6) -- (1.5,3.4);
            \draw[->, arrows={-Triangle[length=0.2cm]}] (1.5,3.6) -- (1.5,4.4);

            \draw[->, arrows={-Triangle[length=0.2cm]}] (3.5,0.6) -- (3.5,1.4);
            \draw[->, arrows={-Triangle[length=0.2cm]}] (3.5,1.6) -- (3.5,2.4);
            \draw[->, arrows={-Triangle[length=0.2cm]}] (3.5,2.6) -- (3.5,3.4);
            \draw[->, arrows={-Triangle[length=0.2cm]}] (3.5,3.6) -- (3.5,4.4);

          \end{scope}
        \end{tikzpicture}
        \caption{Our canonical ordering misses the path from A to B because of an obstacle.}
        \label{fig:can-order-obstacle}
    \end{subfigure}%
    \hspace{2cm}
    \begin{subfigure}[t]{0.31\textwidth}
        \vskip 0pt
        \begin{tikzpicture}[scale=1, thick]
          \begin{scope}
            \fill[black] (2,1) -- +(0, 1) -- +(1, 1) -- +(1,0) -- cycle;
            \fill[lightgray] (3,2) -- +(0, 1) -- +(1, 1) -- +(1,0) -- cycle;
            \draw (0, 0) grid (5, 5);
            \node at (0.5,0.5) {A};
            \node at (2.5,3.5) {B};
            \draw[->, arrows={-Triangle[length=0.2cm]}] (0.7,0.5) -- (1.4,0.5);
            \draw[->, arrows={-Triangle[length=0.2cm]}] (1.6,0.5) -- (2.4,0.5);
            \draw[->, arrows={-Triangle[length=0.2cm]}] (2.6,0.5) -- (3.4,0.5);
            \draw[->, arrows={-Triangle[length=0.2cm]}] (3.6,0.5) -- (4.4,0.5);
            
            \draw[->, arrows={-Triangle[length=0.2cm]}] (0.5,0.7) -- (0.5,1.4);
            \draw[->, arrows={-Triangle[length=0.2cm]}] (0.5,1.6) -- (0.5,2.4);
            \draw[->, arrows={-Triangle[length=0.2cm]}] (0.5,2.6) -- (0.5,3.4);
            \draw[->, arrows={-Triangle[length=0.2cm]}] (0.5,3.6) -- (0.5,4.4);
            \draw[->, arrows={-Triangle[length=0.2cm]}] (4.5,0.6) -- (4.5,1.4);
            \draw[->, arrows={-Triangle[length=0.2cm]}] (4.5,1.6) -- (4.5,2.4);
            \draw[->, arrows={-Triangle[length=0.2cm]}] (4.5,2.6) -- (4.5,3.4);
            \draw[->, arrows={-Triangle[length=0.2cm]}] (4.5,3.6) -- (4.5,4.4);
            
            \draw[->, arrows={-Triangle[length=0.2cm]}] (1.5,0.6) -- (1.5,1.4);
            \draw[->, arrows={-Triangle[length=0.2cm]}] (1.5,1.6) -- (1.5,2.4);
            \draw[->, arrows={-Triangle[length=0.2cm]}] (1.5,2.6) -- (1.5,3.4);
            \draw[->, arrows={-Triangle[length=0.2cm]}] (1.5,3.6) -- (1.5,4.4);

            \draw[->, arrows={-Triangle[length=0.2cm]}] (3.5,0.6) -- (3.5,1.4);
            \draw[->, arrows={-Triangle[length=0.2cm]}] (3.5,1.6) -- (3.5,2.4);
            \draw[->, arrows={-Triangle[length=0.2cm]}] (3.5,2.6) -- (3.5,3.4);
            \draw[->, arrows={-Triangle[length=0.2cm]}] (3.5,3.6) -- (3.5,4.4);
            
            \draw[->, arrows={-Triangle[length=0.2cm]}] (3.6,2.5) -- (4.4,2.5);
            \draw[->, arrows={-Triangle[length=0.2cm]}] (3.4,2.5) -- (2.6,2.5);
            \draw[->, arrows={-Triangle[length=0.2cm]}] (2.4,2.5) -- (1.6,2.5);
            \draw[->, arrows={-Triangle[length=0.2cm]}] (1.4,2.5) -- (0.6,2.5);
            
            \draw[->, arrows={-Triangle[length=0.2cm]}] (2.5,2.6) -- (2.5,3.3);
          \end{scope}
        \end{tikzpicture}
        \caption{The introduction of a jump point that resets the canonical ordering fixes the search space such that target B is now reachable again.}
        \label{fig:can-order-jp}
    \end{subfigure}%
    \caption{Jump points are introduced to make positions reachable that have become unreachable in our canonical ordering due to obstacles. Obstacles are marked as black squares, while jump points are represented through gray squares.}
\end{figure}
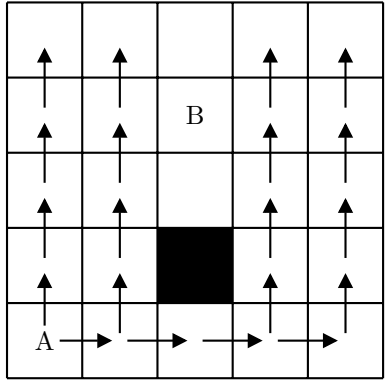
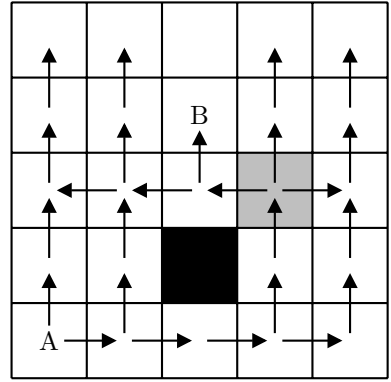

\begin{definition}[Jump Point]\label{def:jp}

Node $y$ is defined as the jump point from node $x$ in direction $d \in D$ if it is the closest (minimizing $k$ in $y=x+kd$) node reachable from $x$ through a single movement in direction $d$ that satisfies at least one of the following conditions:
\begin{enumerate}
\item It is the goal node.
\item $d \in H$
\item It has at least one forced neighbor.
\end{enumerate}
\end{definition}

Algorithm~\ref{alg:successors} shows how to get the successors. This successor function distinguishes JPS4 from A*. In line 2 we get all the nodes that remain after pruning. For each of these nodes we jump ahead in the current direction in line 3 and add the resulting node to the successors (line 4-5).

The actual jumping logic, that implements our definition of jump points is shown in Algorithm~\ref{alg:jump}. We stop jumping whenever we hit an obstacle (lines 3-4). If the direction is horizontal (line 5), the current node has any forced neighbors (line 7), or we found the goal node (line 9), we stop jumping and return the current node as a jump point. Otherwise we proceed jumping in the same direction (line 11).

\begin{algorithm}
\caption{Identify Successors}\label{alg:successors}
\begin{algorithmic}[1]
\Require $p$: parent node, $x$: current node, $g$: goal
\State $successors \gets \emptyset$

\ForAll{$n \in prune(p, x)$}
    \State $n \gets jump(n, direction(x,n), g)$
    \If{$n \neq null$}
        \State add $n$ to $successors$
    \EndIf
\EndFor    
\State \Return $successors$
\end{algorithmic}
\end{algorithm}

\begin{algorithm}
\caption{Function $jump$}\label{alg:jump}
\begin{algorithmic}[1]
\Require $p$: parent node, $d$: direction, $g$: goal
\While {true}
    \State $x \gets step(p, d)$
    \If{$x$ is an obstacle or is outside the grid}
        \State \Return $null$
    \EndIf
    \If{$direction(p,x) \in H$}
        \State \Return $x$
    \EndIf
    \If{$len(forcedNeighbors(p,x)) > 0$}
        \State \Return $x$
    \EndIf
    \If{$x = g$}
        \State \Return $x$
    \EndIf
    
    \State $p \gets x$
\EndWhile
\end{algorithmic}
\end{algorithm}

\subsection{Optimality}

In order to prove that our algorithm always returns an optimal solution, we take the approach by \cite{harabor11} and use turning points to show optimality.

\begin{definition}[Turning Point]\label{def:tp}
A turning point is any node $n_i$ along a path $\langle \dotsc, n_{i-1}, n_i, n_{i+1}, \dotsc \rangle$
where $direction(n_{i -1}, n_i) \neq direction(n_i, n_{i+1})$.\cite{harabor11}
\end{definition}

\begin{algorithm}
\caption{Compute Horizontal First Path}\label{alg:hfpath}
\begin{algorithmic}[1]
\Require $\pi$
\State \Select two consecutive edges in the path, $(n_{k−1}, n_k)$ and $(n_k, n_{k+1})$, where the first edge is a vertical move and the second edge is a horizontal move.
\State \Replace If the node $n'_k$
  located one step horizontally from $n_{k−1}$ is not an obstacle, replace the original two edges with the new pair:
  \begin{itemize}
    \item $(n_{k-1}, n'_k)$: a horizontal move from $n_{k-1}$ to the new node.
    \item $(n'_k, n_{k+1})$: a vertical move from the new node to $n_{k+1}$.
  \end{itemize}
  
\State \Rep Continue steps 1 and 2, searching for and replacing adjacent edge pairs that satisfy the conditions, until no further replacements are possible.
\State \Return $\pi$

\end{algorithmic}
\end{algorithm}

\begin{definition}[Horizontal-First Path]\label{def:hf}
A path $\pi = \langle n_1,n_2, \dotsc, n_k \rangle$ is considered horizontal-first if it does \textbf{not} contain any three consecutive nodes that satisfy the following condition: $n_k$ is a vertical-to-horizontal turning point that could be replaced by a horizontal-to-vertical turning point $n_k'$ such that $\langle n_{k-1},n_k, n_{k+1} \rangle$ becomes $\langle n_{k-1},n_k', n_{k+1} \rangle$ and $len(\langle n_{k-1},n_k, n_{k+1} \rangle) = len(\langle n_{k-1},n_k', n_{k+1} \rangle)$.
\end{definition}

Using Algorithm \ref{alg:hfpath} we can transform any path $\pi$ with optimal length to a symmetric horizontal-first path.

\begin{lemma}\label{lemma:1}
Each turning point along an optimal horizontal-first path $\pi'$
is also a jump point.
\end{lemma}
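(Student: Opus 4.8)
The plan is to classify each turning point $n_i$ by the directions of its incoming edge $direction(n_{i-1},n_i)$ and outgoing edge $direction(n_i,n_{i+1})$, and to show that every resulting class satisfies at least one condition of Definition~\ref{def:jp}. First I would observe that on a cycle-free optimal path two consecutive edges can never point in opposite directions (that would revisit a node and create a cycle), so at a turning point the incoming and outgoing directions must be perpendicular. Hence every turn is either horizontal-to-vertical or vertical-to-horizontal, and these two cases exhaust all turning points.

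The horizontal-to-vertical case is immediate. Here the edge entering $n_i$ is horizontal, so taking $x=n_{i-1}$ and $d=direction(n_{i-1},n_i)\in H$, the node $n_i=x+d$ satisfies condition~(2) of Definition~\ref{def:jp} and is therefore a jump point without any further work.

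The substance of the lemma lies in the vertical-to-horizontal case, which I expect to be the main obstacle. I would fix coordinates so that $n_{i-1}=(a,b)$, $n_i=(a,b+1)$ (an upward move) and $n_{i+1}=(a+1,b+1)$ (a rightward move); the other three sign combinations follow by reflection. Because $\pi'$ is horizontal-first, Definition~\ref{def:hf} forbids rewriting this turn as the horizontal-to-vertical turn through $n_i'=(a+1,b)$ via Algorithm~\ref{alg:hfpath}. The key point is that both candidate detours are $2$-move L-paths between the same endpoints $n_{i-1}$ and $n_{i+1}$, so their lengths always agree; consequently the replacement can fail only because $n_i'$ is an obstacle. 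I would then check that $n_{i+1}$ is a forced neighbor of $x=n_i$ with parent $p=n_{i-1}$: by Algorithm~\ref{alg:getnn} the unique natural neighbor in the vertical case is $(a,b+2)$, so clause~(1) of Definition~\ref{def:forced} holds; and the only two length-$2$ routes from $p$ to $n_{i+1}$ pass through either $x$ or the obstacle $n_i'$, so every path from $p$ to $n_{i+1}$ avoiding $x$ is strictly longer than $len(\langle p,x,n_{i+1}\rangle)=2$, giving clause~(2). Thus $n_i$ has a forced neighbor and satisfies condition~(3) of Definition~\ref{def:jp}.

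The two cases together cover every turning point, so each turning point of $\pi'$ is a jump point. The only delicate link in the argument is the chain \emph{horizontal-first $\Rightarrow$ no admissible swap $\Rightarrow$ $n_i'$ is an obstacle}: there I must be careful to confirm that the equal-length requirement in Definition~\ref{def:hf} is automatically met (both L-paths have length $2$), so the sole obstruction to the swap is the obstacle, and to invoke the uniqueness of the two monotone length-$2$ routes between $(a,b)$ and $(a+1,b+1)$ so that no alternative short path from $p$ to $n_{i+1}$ can exist once $n_i'$ is blocked.
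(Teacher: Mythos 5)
Your proposal is correct and follows essentially the same route as the paper's own proof: a case split into horizontal-to-vertical turns (immediate from condition~2 of Definition~\ref{def:jp}) and vertical-to-horizontal turns, where the horizontal-first property forces an obstacle at the alternative corner and hence makes $n_{i+1}$ a forced neighbor. You actually supply more detail than the paper does (ruling out 180-degree turns, verifying both clauses of Definition~\ref{def:forced}, and noting the equal-length condition is automatic), which only strengthens the argument.
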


\begin{proof}
Let $n_k$ be an arbitrary turning point node along $\pi'$.
There are only two kinds of turning points: 
\\

\noindent
\textbf{Horizontal-to-Vertical}: 
Because the direction from $n_{k -1}$ to $n_k$ is horizontal, $n_k$ is a jump point by Definition~\ref{def:jp}.
\\

\noindent
\textbf{Vertical-to-Horizontal}:
We show that there must be an obstacle adjacent to $n_k$ by contradiction:
Assume there is no obstacle adjacent to $n_k$. Then we can replace $n_k$ by a horizontal-to-vertical turning point. This contradicts the assumption that $\pi'$ is horizontal-first. Therefore $n_{k+1}$ must be a forced neighbor of $n_k$. That implies that $n_k$ is a jump point according to Definition~\ref{def:jp}.

\end{proof}

\begin{theorem}
Searching with jump point pruning always returns an optimal solution.
\end{theorem}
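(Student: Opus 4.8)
The plan is to view JPS4 as A* equipped with the modified successor function of Algorithm~\ref{alg:successors}, so that standard A* optimality applies once two facts are verified: (i) the guiding heuristic is admissible and consistent on the grid — Manhattan distance is the natural choice, and it inherits these properties on the jump-point search graph because each jump edge has cost equal to the number of unit moves it represents; and (ii) the search graph induced by repeatedly applying the successor function still contains at least one optimal-length path from start to goal. Point (i) is routine, so the whole weight of the argument rests on establishing (ii).

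To establish (ii) I would fix an arbitrary start $s$ and goal $g$ that admit an optimal path, and apply Algorithm~\ref{alg:hfpath} to convert that path into an optimal horizontal-first path $\pi'$ of identical length. I would then decompose $\pi'$ into its maximal straight segments, whose endpoints are exactly its turning points. By Lemma~\ref{lemma:1}, every such turning point is a jump point, so $\pi'$ visits a chain $s = j_0, j_1, \dotsc, j_m = g$ of jump points. It then remains to show that each consecutive pair $j_i, j_{i+1}$ is connected by the successor function and that the connecting jumps lie along $\pi'$, so that total cost is preserved.

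For this connection step I would argue by the two turning-point types. At a horizontal-to-vertical turning point the move onto the next segment is a natural neighbor, since Algorithm~\ref{alg:getnn} retains every non-parent neighbor after a horizontal arrival; at a vertical-to-horizontal turning point the obstacle guaranteed by the proof of Lemma~\ref{lemma:1} makes the next node a forced neighbor. In both cases the required direction survives $prune$ and is passed to $jump$ (Algorithm~\ref{alg:jump}). Since each straight segment of $\pi'$ is obstacle-free, a jump in its direction cannot terminate on an obstacle before reaching $j_{i+1}$: for a horizontal segment every single step is itself a jump point by Definition~\ref{def:jp} (case $d \in H$), and for a vertical segment the jump stops either exactly at $j_{i+1}$ or at an intermediate node carrying a forced neighbor, from which the search resumes along the same line. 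Collecting these jumps yields a walk in the jump-point graph from $s$ to $g$ of length $len(\pi')$, so an optimal solution survives the pruning and A* returns one.

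The main obstacle will be this connection step: I must rule out that $jump$ overshoots a required turning point or the goal, and confirm that the needed direction is always produced by $prune$ rather than discarded. The vertical case is the delicate one, because an intermediate node with a forced neighbor halts the jump early; I would dispose of this by observing that such a halt only inserts additional jump points on the segment without leaving the straight line, so the optimal walk is merely refined, not broken. Care is also needed at the start node, where $prune$ returns all neighbors, to ensure the first segment of $\pi'$ is indeed generated as a successor.
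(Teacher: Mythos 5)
Your proposal follows essentially the same route as the paper: transform the optimal path into a horizontal-first path via Algorithm~\ref{alg:hfpath}, decompose it into maximal straight segments whose endpoints are turning points, invoke Lemma~\ref{lemma:1} to identify these with jump points, and argue that the pruning rules and the $jump$ function preserve connectivity along each segment (the paper glosses the intermediate-stop and overshoot issues with ``any intermediate expansions that might occur don't keep the algorithm from reaching $n_k$,'' which you spell out more carefully). The argument is correct and matches the paper's proof in structure and substance.
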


\begin{proof}
Given an optimal path $\pi$, let $\pi'$  denote its horizontal-first version obtained via Algorithm \ref{alg:hfpath}.

We divide $\pi'$ into adjacent path segments $\pi' = \pi'_0 + \pi'_1 + \ldots + \pi'_n$ with $\pi'_i = \langle n_0, n_1, \ldots, n_{k-1}, n_k \rangle$ being a subpath such that $\forall{i \in \{i \in \mathbb{N}: 0 \leq i < k-1\}: direction(n_{i}, n_{i+1}) = direction(n_{i+1}, n_{i+2})}$.
So all moves in each $\pi'_i$ are either exclusively up, down, left or right. The above pruning rules never prune the node in movement direction which implies that the algorithm reaches $n_k$ optimally from $n_0$, because any intermediate expansions that might occur don't keep the algorithm from reaching $n_k$ on an optimal path ($\pi'_i$).
Now that we showed the optimality of the individual segments, it remains to show that the connections of the segments are also reached optimally.

Except the start and goal node of $\pi'$, the beginning $n_0$ and end $n_k$ of each $\pi'_i$ are turning points according to Definition \ref{def:tp}. From Lemma \ref{lemma:1} follows that each turning point is also a jump point and is therefore expanded by the algorithm.

The start node of $\pi'$ is always expanded by the algorithm first. The goal node of $\pi'$ is a jump point by definition, so it is also expanded.

\end{proof}

\section{Benchmarks}

We use three different benchmarks from Hierarchical Open Graph \cite{hog}. That data set was published to help improve the consistency of experiments for pathfinding on grid maps \cite{snr}.

The data set is stored as an ASCII grid that encodes different types of terrain for each position. It differentiates between \textit{passable terrain}, \textit{out of bounds}, \textit{trees} (unpassable), \textit{swamp} (passable from regular terrain) and \textit{water} (traversable, but not passable from terrain).
However, in our experiments we consider every position that is not \textit{passable terrain} as unpassable/blocked. Since this is done for both A* and JPS4 the comparison of these two algorithms is still valid.

We use the following data sets:
\begin{itemize}
\item \textbf{Dragon Age: Origins}

The maps are extracted from BioWare's role playing game ``Dragon Age: Origins''. It contains 156 maps and 155620 problems.

\item \textbf{Rooms}

This benchmark contains 40 maps of size 512x512 and 84350 problems.
There are 5 room sizes: 8x8, 16x16, 32x32 and 64x64. For each room size there exist 10 different maps that are completely separated into rooms of the corresponding size.

\item \textbf{Baldurs Gate 2}

The maps are extracted from BioWare's role playing game ``Baldur's Gate 2''. The set contains 75 maps that are scaled to a size of 512x512. It contains 122600 problems.

\item \textbf{Empty Map}

We generated 100 problems for each path length from 1 to 400 (40000 in total). Since there are no 100 different pathfinding problems on empty maps with path lengths of small sizes, we had to repeat those to reach 100 problems per path length.
\end{itemize}

The tests were run on a machine with 32GB of RAM, a 2.90GHz Intel Core i7 processor with 8 cores and Windows 11.
Since the experiments were run in a single Node.js process with limited memory, only 4GB of RAM and a single processor core were used.

\section{Results}

Figure~\ref{fig:speedupresults} shows the performance of JPS4 compared to A* in terms of speedup. Note that we can't compare to the original JPS8 algorithm because it is not applicable to 4-connected grid maps. For larger path lengths we observe a speedup of an order of magnitude for all benchmarks but the empty map.

On obstacle free (empty) maps A* outperforms JPS4. That is because on these kind of maps, A* can get to the target with the heuristic relatively quickly. Usually JPS4 gets its performance advantage by reducing the maximum size and the number of operations (node expansions) of the open and closed lists. If we take an empty map of size 500x500 and run a pathfinding from the top left corner to the lower right corner, we get an optimal path of length 1000. A* has a maximum open-list size of 1495 and takes 3488 open-list operations. JPS4 on the other side only has a maximum open-list size of 2 and only needs 1001 open-list operations. However, JPS4 visits every node on the whole map. It does not add these nodes to the open-list but it still needs to visit more nodes than A*. As a result, A* is still faster.

All considered benchmarks, but especially the Baldur's Gate 2 set have on average a much lower obstacle density for shorter paths than for longer ones. That means that the ratio of obstacles to free positions in the rectangular area that is defined by the start and end node of the search is lower for shorter paths. As we have seen, A* performs better on empty spaces. That explains why A* also outperforms JPS4 on the Baldur's Gate 2 benchmark for path lengths below 60.

The rooms benchmark is structurally very heterogeneous. Therefore the speedup grows almost proportionally with the path length. There is a spike in the graph at about a path length of 1000. The reason for this is that there are significantly fewer problems for path lengths over 1000. For path lengths below 960 there are approximately 8 to 9 problems per path length. Then the amount of problems per path length drops significantly, so that for paths with a length over 1060 there are only 0.8 to 0.9 problems per path length. Since the runtime of problems can be quite different, having only one path per path length adds some noise to the chart.

\section{Conclusion and Future Work}
JPS4 can give a significant speedup over A*. That is especially true for environments with a high obstacle density like rooms which is similar to many generated dungeon maps in video games. Even on benchmarks from real video games like Baldur's Gate 2 and Dragon Age: Origins the performance advantage goes up by an order of magnitude for most path lengths. In the special case of obstacle free maps A* outperforms JPS4. However, the data from Baldur's Gate 2 and Dragon Age: Origins suggests that the obstacle density grows with path length on real problem spaces such that this effect weakens for longer paths. For maps with very low obstacle densities A* might be the better choice. For most other cases JPS4 is an alternative that can provide order of magnitudes faster searches.

Further work could evaluate bounded JPS4. That means the maximum jump length is bounded. It will result in more node expansions but it will at the same time reduce the number of visited nodes. That could weaken the performance problems on empty spaces.

If the density of the map is known before running the algorithm, a pre-computation of the density of certain areas could help to pick either A* or JPS4 at runtime and therefore use them whenever they are suited best.

JPS4 can also be combined with a precomputation approach like \cite{Harabor_Botea_2010}. Since it identifies empty spaces and skips them, it is a good approach to tackle the performance issues of JPS4 on those empty spaces.

\begin{figure}
    \begin{subfigure}[t]{0.47\textwidth}
        \vskip 0pt
        \begin{tikzpicture}[gnuplot]
\tikzset{every node/.append style={scale=0.80}}
\path (0.000,0.000) rectangle (7.500,5.000);
\gpcolor{color=gp lt color border}
\gpsetlinetype{gp lt border}
\gpsetdashtype{gp dt solid}
\gpsetlinewidth{0.50}
\draw[gp path] (1.087,0.967)--(0.907,0.967);
\node[gp node right] at (0.760,0.967) {$0$};
\draw[gp path] (1.087,1.724)--(0.907,1.724);
\node[gp node right] at (0.760,1.724) {$5$};
\draw[gp path] (1.087,2.481)--(0.907,2.481);
\node[gp node right] at (0.760,2.481) {$10$};
\draw[gp path] (1.087,3.239)--(0.907,3.239);
\node[gp node right] at (0.760,3.239) {$15$};
\draw[gp path] (1.087,3.996)--(0.907,3.996);
\node[gp node right] at (0.760,3.996) {$20$};
\draw[gp path] (1.087,4.753)--(0.907,4.753);
\node[gp node right] at (0.760,4.753) {$25$};
\draw[gp path] (1.087,0.967)--(1.087,0.787);
\node[gp node center] at (1.087,0.541) {$0$};
\draw[gp path] (1.750,0.967)--(1.750,0.787);
\node[gp node center] at (1.750,0.541) {$200$};
\draw[gp path] (2.414,0.967)--(2.414,0.787);
\node[gp node center] at (2.414,0.541) {$400$};
\draw[gp path] (3.077,0.967)--(3.077,0.787);
\node[gp node center] at (3.077,0.541) {$600$};
\draw[gp path] (3.741,0.967)--(3.741,0.787);
\node[gp node center] at (3.741,0.541) {$800$};
\draw[gp path] (4.404,0.967)--(4.404,0.787);
\node[gp node center] at (4.404,0.541) {$1000$};
\draw[gp path] (5.068,0.967)--(5.068,0.787);
\node[gp node center] at (5.068,0.541) {$1200$};
\draw[gp path] (5.731,0.967)--(5.731,0.787);
\node[gp node center] at (5.731,0.541) {$1400$};
\draw[gp path] (6.395,0.967)--(6.395,0.787);
\node[gp node center] at (6.395,0.541) {$1600$};
\draw[gp path] (7.058,0.967)--(7.058,0.787);
\node[gp node center] at (7.058,0.541) {$1800$};
\draw[gp path] (1.087,4.753)--(1.087,0.967)--(7.058,0.967)--(7.058,4.753)--cycle;
\draw[gp path] (5.114,1.147)--(5.114,1.639)--(6.911,1.639)--(6.911,1.147)--cycle;
\node[gp node left] at (5.114,1.393) {JPS4 };
\gpcolor{rgb color={0.000,0.000,0.000}}
\draw[gp path] (5.996,1.393)--(6.764,1.393);
\draw[gp path] (1.120,1.059)--(1.153,1.080)--(1.187,1.091)--(1.220,1.093)--(1.253,1.102)%
  --(1.286,1.120)--(1.319,1.154)--(1.352,1.177)--(1.386,1.221)--(1.419,1.261)--(1.452,1.317)%
  --(1.485,1.356)--(1.518,1.439)--(1.551,1.474)--(1.585,1.523)--(1.618,1.579)--(1.651,1.653)%
  --(1.684,1.723)--(1.717,1.781)--(1.750,1.877)--(1.784,1.881)--(1.817,1.952)--(1.850,2.051)%
  --(1.883,2.134)--(1.916,2.172)--(1.949,2.279)--(1.983,2.372)--(2.016,2.283)--(2.049,2.468)%
  --(2.082,2.464)--(2.115,2.568)--(2.149,2.642)--(2.182,2.658)--(2.215,2.693)--(2.248,2.794)%
  --(2.281,2.787)--(2.314,2.814)--(2.348,3.000)--(2.381,3.017)--(2.414,3.196)--(2.447,3.241)%
  --(2.480,3.269)--(2.513,3.394)--(2.547,3.385)--(2.580,3.379)--(2.613,3.362)--(2.646,3.580)%
  --(2.679,3.649)--(2.712,3.597)--(2.746,3.738)--(2.779,3.709)--(2.812,4.004)--(2.845,3.993)%
  --(2.878,4.134)--(2.911,4.113)--(2.945,4.087)--(2.978,4.083)--(3.011,4.265)--(3.044,4.132)%
  --(3.077,4.289)--(3.111,4.145)--(3.144,4.134)--(3.177,4.150)--(3.210,4.286)--(3.243,4.511)%
  --(3.276,4.118)--(3.310,4.084)--(3.343,3.972)--(3.376,4.050)--(3.409,4.104)--(3.442,4.095)%
  --(3.475,4.073)--(3.509,3.766)--(3.542,3.609)--(3.575,3.640)--(3.608,3.539)--(3.641,3.735)%
  --(3.674,3.602)--(3.708,3.660)--(3.741,3.702)--(3.774,3.757)--(3.807,3.748)--(3.840,3.641)%
  --(3.873,3.619)--(3.907,3.764)--(3.940,3.722)--(3.973,3.743)--(4.006,3.524)--(4.039,3.509)%
  --(4.073,3.455)--(4.106,3.690)--(4.139,3.494)--(4.172,3.567)--(4.205,3.595)--(4.238,3.561)%
  --(4.272,3.640)--(4.305,3.617)--(4.338,3.444)--(4.371,3.697)--(4.404,3.664)--(4.437,3.660)%
  --(4.471,3.451)--(4.504,3.518)--(4.537,3.326)--(4.570,3.305)--(4.603,3.322)--(4.636,3.293)%
  --(4.670,3.067)--(4.703,3.015)--(4.736,3.073)--(4.769,3.096)--(4.802,3.025)--(4.835,3.102)%
  --(4.869,3.121)--(4.902,3.030)--(4.935,3.029)--(4.968,3.129)--(5.001,3.070)--(5.034,2.951)%
  --(5.068,2.807)--(5.101,3.005)--(5.134,2.968)--(5.167,3.076)--(5.200,3.088)--(5.234,3.041)%
  --(5.267,3.191)--(5.300,3.267)--(5.333,3.076)--(5.366,3.519)--(5.399,3.241)--(5.433,3.259)%
  --(5.466,3.297)--(5.499,3.105)--(5.532,3.100)--(5.565,3.190)--(5.598,3.060)--(5.632,3.056)%
  --(5.665,2.992)--(5.698,3.072)--(5.731,3.208)--(5.764,2.979)--(5.797,2.962)--(5.831,2.869)%
  --(5.864,2.752)--(5.897,2.634)--(5.930,2.744)--(5.963,2.581)--(5.996,2.485)--(6.030,2.349)%
  --(6.063,2.511)--(6.096,2.511)--(6.129,2.428)--(6.162,2.408)--(6.196,2.457)--(6.229,2.440)%
  --(6.262,2.277)--(6.295,2.378)--(6.328,2.388)--(6.361,2.258)--(6.395,2.036)--(6.428,2.012)%
  --(6.461,2.039)--(6.494,2.022)--(6.527,2.034)--(6.560,2.053)--(6.594,2.043)--(6.627,2.030)%
  --(6.660,2.025)--(6.693,2.053)--(6.726,2.056)--(6.759,2.074)--(6.793,2.053)--(6.826,2.053)%
  --(6.859,2.057);
\gpcolor{color=gp lt color border}
\draw[gp path] (1.087,4.753)--(1.087,0.967)--(7.058,0.967)--(7.058,4.753)--cycle;
\node[gp node center,rotate=-270.0] at (0.085,2.860) {Avg speedup vs A*};
\node[gp node center] at (4.072,0.172) {Path length};
\gpdefrectangularnode{gp plot 1}{\pgfpoint{1.087cm}{0.967cm}}{\pgfpoint{7.058cm}{4.753cm}}
\end{tikzpicture}
        \caption{Baldur's Gate 2 benchmark}
        \label{fig:speedupbg}
    \end{subfigure}%
    \hspace*{0.5cm}   
    \begin{subfigure}[t]{0.47\textwidth}
        \vskip 0pt
        \begin{tikzpicture}[gnuplot]
\tikzset{every node/.append style={scale=0.80}}
\path (0.000,0.000) rectangle (7.500,5.000);
\gpcolor{color=gp lt color border}
\gpsetlinetype{gp lt border}
\gpsetdashtype{gp dt solid}
\gpsetlinewidth{0.50}
\draw[gp path] (1.087,0.967)--(0.907,0.967);
\node[gp node right] at (0.760,0.967) {$0$};
\draw[gp path] (1.087,1.508)--(0.907,1.508);
\node[gp node right] at (0.760,1.508) {$2$};
\draw[gp path] (1.087,2.049)--(0.907,2.049);
\node[gp node right] at (0.760,2.049) {$4$};
\draw[gp path] (1.087,2.590)--(0.907,2.590);
\node[gp node right] at (0.760,2.590) {$6$};
\draw[gp path] (1.087,3.130)--(0.907,3.130);
\node[gp node right] at (0.760,3.130) {$8$};
\draw[gp path] (1.087,3.671)--(0.907,3.671);
\node[gp node right] at (0.760,3.671) {$10$};
\draw[gp path] (1.087,4.212)--(0.907,4.212);
\node[gp node right] at (0.760,4.212) {$12$};
\draw[gp path] (1.087,4.753)--(0.907,4.753);
\node[gp node right] at (0.760,4.753) {$14$};
\draw[gp path] (1.087,0.967)--(1.087,0.787);
\node[gp node center] at (1.087,0.541) {$0$};
\draw[gp path] (1.940,0.967)--(1.940,0.787);
\node[gp node center] at (1.940,0.541) {$500$};
\draw[gp path] (2.793,0.967)--(2.793,0.787);
\node[gp node center] at (2.793,0.541) {$1000$};
\draw[gp path] (3.646,0.967)--(3.646,0.787);
\node[gp node center] at (3.646,0.541) {$1500$};
\draw[gp path] (4.499,0.967)--(4.499,0.787);
\node[gp node center] at (4.499,0.541) {$2000$};
\draw[gp path] (5.352,0.967)--(5.352,0.787);
\node[gp node center] at (5.352,0.541) {$2500$};
\draw[gp path] (6.205,0.967)--(6.205,0.787);
\node[gp node center] at (6.205,0.541) {$3000$};
\draw[gp path] (7.058,0.967)--(7.058,0.787);
\node[gp node center] at (7.058,0.541) {$3500$};
\draw[gp path] (1.087,4.753)--(1.087,0.967)--(7.058,0.967)--(7.058,4.753)--cycle;
\draw[gp path] (5.114,1.147)--(5.114,1.639)--(6.911,1.639)--(6.911,1.147)--cycle;
\node[gp node left] at (5.114,1.393) {JPS4 };
\gpcolor{rgb color={0.000,0.000,0.000}}
\draw[gp path] (5.996,1.393)--(6.764,1.393);
\draw[gp path] (1.104,1.226)--(1.121,1.311)--(1.138,1.354)--(1.155,1.447)--(1.172,1.509)%
  --(1.189,1.607)--(1.206,1.722)--(1.223,1.826)--(1.241,1.974)--(1.258,1.991)--(1.275,2.138)%
  --(1.292,2.304)--(1.309,2.344)--(1.326,2.417)--(1.343,2.534)--(1.360,2.609)--(1.377,2.667)%
  --(1.394,2.758)--(1.411,2.791)--(1.428,2.857)--(1.445,2.951)--(1.462,3.043)--(1.479,3.099)%
  --(1.496,3.137)--(1.514,3.178)--(1.531,3.204)--(1.548,3.271)--(1.565,3.254)--(1.582,3.315)%
  --(1.599,3.286)--(1.616,3.332)--(1.633,3.370)--(1.650,3.419)--(1.667,3.446)--(1.684,3.399)%
  --(1.701,3.473)--(1.718,3.466)--(1.735,3.445)--(1.752,3.478)--(1.769,3.543)--(1.786,3.562)%
  --(1.804,3.586)--(1.821,3.616)--(1.838,3.505)--(1.855,3.676)--(1.872,3.544)--(1.889,3.533)%
  --(1.906,3.573)--(1.923,3.588)--(1.940,3.508)--(1.957,3.522)--(1.974,3.542)--(1.991,3.492)%
  --(2.008,3.559)--(2.025,3.488)--(2.042,3.528)--(2.059,3.464)--(2.076,3.453)--(2.094,3.562)%
  --(2.111,3.487)--(2.128,3.529)--(2.145,3.481)--(2.162,3.564)--(2.179,3.494)--(2.196,3.494)%
  --(2.213,3.560)--(2.230,3.744)--(2.247,3.533)--(2.264,3.585)--(2.281,3.530)--(2.298,3.623)%
  --(2.315,3.671)--(2.332,3.581)--(2.349,3.683)--(2.367,3.744)--(2.384,3.742)--(2.401,3.774)%
  --(2.418,3.758)--(2.435,3.748)--(2.452,3.735)--(2.469,3.891)--(2.486,3.817)--(2.503,3.858)%
  --(2.520,3.746)--(2.537,3.750)--(2.554,3.916)--(2.571,3.669)--(2.588,3.850)--(2.605,3.748)%
  --(2.622,4.022)--(2.639,3.979)--(2.657,4.063)--(2.674,3.885)--(2.691,4.080)--(2.708,4.071)%
  --(2.725,3.968)--(2.742,3.759)--(2.759,3.857)--(2.776,4.043)--(2.793,3.994)--(2.810,3.915)%
  --(2.827,4.247)--(2.844,4.247)--(2.861,4.070)--(2.878,4.288)--(2.895,4.218)--(2.912,3.772)%
  --(2.929,3.778)--(2.947,3.719)--(2.964,3.827)--(2.981,3.680)--(2.998,3.682)--(3.015,3.734)%
  --(3.032,3.870)--(3.049,3.681)--(3.066,3.810)--(3.083,3.986)--(3.100,3.855)--(3.117,3.973)%
  --(3.134,3.788)--(3.151,4.023)--(3.168,4.180)--(3.185,3.794)--(3.202,3.773)--(3.220,4.048)%
  --(3.237,3.821)--(3.254,3.712)--(3.271,3.849)--(3.288,3.866)--(3.305,3.758)--(3.322,3.765)%
  --(3.339,3.819)--(3.356,3.937)--(3.373,3.783)--(3.390,3.968)--(3.407,3.924)--(3.424,3.857)%
  --(3.441,3.710)--(3.458,3.796)--(3.475,3.801)--(3.492,4.021)--(3.510,3.761)--(3.527,3.979)%
  --(3.544,3.744)--(3.561,3.836)--(3.578,3.793)--(3.595,4.215)--(3.612,3.968)--(3.629,3.886)%
  --(3.646,3.843)--(3.663,4.015)--(3.680,3.960)--(3.697,4.230)--(3.714,3.846)--(3.731,3.879)%
  --(3.748,3.958)--(3.765,3.982)--(3.782,3.853)--(3.800,3.856)--(3.817,3.991)--(3.834,3.937)%
  --(3.851,3.953)--(3.868,3.999)--(3.885,4.009)--(3.902,3.851)--(3.919,4.070)--(3.936,3.614)%
  --(3.953,3.564)--(3.970,3.510)--(3.987,3.568)--(4.004,3.551)--(4.021,3.528)--(4.038,3.523)%
  --(4.055,3.519)--(4.073,3.517)--(4.090,3.539)--(4.107,3.546)--(4.124,3.531)--(4.141,3.407)%
  --(4.158,3.534)--(4.175,3.588)--(4.192,3.536)--(4.209,3.362)--(4.226,3.372)--(4.243,3.303)%
  --(4.260,3.544)--(4.277,3.470)--(4.294,3.335)--(4.311,3.511)--(4.328,3.355)--(4.345,3.599)%
  --(4.363,3.568)--(4.380,3.723)--(4.397,3.576)--(4.414,3.782)--(4.431,3.644)--(4.448,3.627)%
  --(4.465,3.761)--(4.482,3.471)--(4.499,3.495)--(4.516,3.589)--(4.533,3.849)--(4.550,3.784)%
  --(4.567,3.724)--(4.584,3.684)--(4.601,3.678)--(4.618,3.539)--(4.635,3.609)--(4.653,3.612)%
  --(4.670,3.650)--(4.687,3.640)--(4.704,3.638)--(4.721,3.659)--(4.738,3.515)--(4.755,3.626)%
  --(4.772,3.718)--(4.789,3.650)--(4.806,3.604)--(4.823,3.602)--(4.840,3.624)--(4.857,3.597)%
  --(4.874,3.488)--(4.891,3.594)--(4.908,3.490)--(4.926,3.469)--(4.943,3.437)--(4.960,3.356)%
  --(4.977,3.389)--(4.994,3.497)--(5.011,3.492)--(5.028,3.575)--(5.045,3.515)--(5.062,3.342)%
  --(5.079,3.588)--(5.096,3.598)--(5.113,3.443)--(5.130,3.579)--(5.147,3.544)--(5.164,3.646)%
  --(5.181,3.654)--(5.198,3.441)--(5.216,3.499)--(5.233,3.762)--(5.250,3.438)--(5.267,3.695)%
  --(5.284,3.307)--(5.301,3.441)--(5.318,3.571)--(5.335,3.679)--(5.352,3.407)--(5.369,3.511)%
  --(5.386,3.445)--(5.403,3.483)--(5.420,3.620)--(5.437,3.448)--(5.454,3.480)--(5.471,3.476)%
  --(5.488,3.575)--(5.506,3.482)--(5.523,3.530)--(5.540,3.452)--(5.557,3.640)--(5.574,3.600)%
  --(5.591,3.496)--(5.608,3.438)--(5.625,3.614)--(5.642,3.609)--(5.659,3.569)--(5.676,3.569)%
  --(5.693,3.476)--(5.710,3.513)--(5.727,3.524)--(5.744,3.559)--(5.761,3.580)--(5.779,3.519)%
  --(5.796,3.419)--(5.813,3.543)--(5.830,3.641)--(5.847,3.693)--(5.864,3.506)--(5.881,3.584)%
  --(5.898,3.506)--(5.915,3.543)--(5.932,3.590)--(5.949,3.499)--(5.966,3.462)--(5.983,3.625)%
  --(6.000,3.480)--(6.017,3.548)--(6.034,3.539)--(6.051,3.526)--(6.069,3.485)--(6.086,3.634)%
  --(6.103,3.552)--(6.120,3.563)--(6.137,3.489)--(6.154,3.592)--(6.171,3.504)--(6.188,3.562)%
  --(6.205,3.607)--(6.222,3.526)--(6.239,3.539)--(6.256,3.510)--(6.273,3.571)--(6.290,3.584)%
  --(6.307,3.554)--(6.324,3.611)--(6.341,3.571)--(6.359,3.584)--(6.376,3.563)--(6.393,3.607)%
  --(6.410,3.653)--(6.427,3.601)--(6.444,3.612)--(6.461,3.658)--(6.478,3.501)--(6.495,3.657)%
  --(6.512,3.490)--(6.529,3.810)--(6.546,3.655)--(6.563,3.607)--(6.580,3.608)--(6.597,3.651)%
  --(6.614,3.603);
\gpcolor{color=gp lt color border}
\draw[gp path] (1.087,4.753)--(1.087,0.967)--(7.058,0.967)--(7.058,4.753)--cycle;
\node[gp node center,rotate=-270.0] at (0.085,2.860) {Avg speedup vs A*};
\node[gp node center] at (4.072,0.172) {Path length};
\gpdefrectangularnode{gp plot 1}{\pgfpoint{1.087cm}{0.967cm}}{\pgfpoint{7.058cm}{4.753cm}}
\end{tikzpicture}
        \caption{Dragon Age: Origins benchmark}
        \label{fig:speedupdao}
    \end{subfigure}%
    \vskip 20pt
    \begin{subfigure}[t]{0.47\textwidth}
        \vskip 0pt
        \begin{tikzpicture}[gnuplot]
\tikzset{every node/.append style={scale=0.80}}
\path (0.000,0.000) rectangle (7.500,5.000);
\gpcolor{color=gp lt color border}
\gpsetlinetype{gp lt border}
\gpsetdashtype{gp dt solid}
\gpsetlinewidth{0.50}
\draw[gp path] (1.087,0.967)--(0.907,0.967);
\node[gp node right] at (0.760,0.967) {$0$};
\draw[gp path] (1.087,1.598)--(0.907,1.598);
\node[gp node right] at (0.760,1.598) {$10$};
\draw[gp path] (1.087,2.229)--(0.907,2.229);
\node[gp node right] at (0.760,2.229) {$20$};
\draw[gp path] (1.087,2.860)--(0.907,2.860);
\node[gp node right] at (0.760,2.860) {$30$};
\draw[gp path] (1.087,3.491)--(0.907,3.491);
\node[gp node right] at (0.760,3.491) {$40$};
\draw[gp path] (1.087,4.122)--(0.907,4.122);
\node[gp node right] at (0.760,4.122) {$50$};
\draw[gp path] (1.087,4.753)--(0.907,4.753);
\node[gp node right] at (0.760,4.753) {$60$};
\draw[gp path] (1.087,0.967)--(1.087,0.787);
\node[gp node center] at (1.087,0.541) {$0$};
\draw[gp path] (2.082,0.967)--(2.082,0.787);
\node[gp node center] at (2.082,0.541) {$200$};
\draw[gp path] (3.077,0.967)--(3.077,0.787);
\node[gp node center] at (3.077,0.541) {$400$};
\draw[gp path] (4.073,0.967)--(4.073,0.787);
\node[gp node center] at (4.073,0.541) {$600$};
\draw[gp path] (5.068,0.967)--(5.068,0.787);
\node[gp node center] at (5.068,0.541) {$800$};
\draw[gp path] (6.063,0.967)--(6.063,0.787);
\node[gp node center] at (6.063,0.541) {$1000$};
\draw[gp path] (7.058,0.967)--(7.058,0.787);
\node[gp node center] at (7.058,0.541) {$1200$};
\draw[gp path] (1.087,4.753)--(1.087,0.967)--(7.058,0.967)--(7.058,4.753)--cycle;
\draw[gp path] (5.114,1.147)--(5.114,1.639)--(6.911,1.639)--(6.911,1.147)--cycle;
\node[gp node left] at (5.114,1.393) {JPS4 };
\gpcolor{rgb color={0.000,0.000,0.000}}
\draw[gp path] (5.996,1.393)--(6.764,1.393);
\draw[gp path] (1.137,1.031)--(1.187,1.076)--(1.236,1.123)--(1.286,1.160)--(1.336,1.213)%
  --(1.386,1.265)--(1.435,1.301)--(1.485,1.360)--(1.535,1.401)--(1.585,1.426)--(1.634,1.490)%
  --(1.684,1.522)--(1.734,1.550)--(1.784,1.575)--(1.833,1.600)--(1.883,1.620)--(1.933,1.636)%
  --(1.983,1.674)--(2.032,1.688)--(2.082,1.734)--(2.132,1.774)--(2.182,1.740)--(2.231,1.795)%
  --(2.281,1.808)--(2.331,1.808)--(2.381,1.844)--(2.430,1.858)--(2.480,1.840)--(2.530,1.957)%
  --(2.580,1.895)--(2.630,1.891)--(2.679,1.906)--(2.729,1.952)--(2.779,1.927)--(2.829,1.989)%
  --(2.878,2.009)--(2.928,2.021)--(2.978,2.033)--(3.028,2.081)--(3.077,2.035)--(3.127,2.040)%
  --(3.177,2.105)--(3.227,2.111)--(3.276,2.091)--(3.326,2.136)--(3.376,2.149)--(3.426,2.136)%
  --(3.475,2.167)--(3.525,2.257)--(3.575,2.234)--(3.625,2.236)--(3.674,2.201)--(3.724,2.270)%
  --(3.774,2.249)--(3.824,2.263)--(3.873,2.240)--(3.923,2.286)--(3.973,2.280)--(4.023,2.379)%
  --(4.073,2.348)--(4.122,2.381)--(4.172,2.372)--(4.222,2.372)--(4.272,2.468)--(4.321,2.425)%
  --(4.371,2.432)--(4.421,2.395)--(4.471,2.529)--(4.520,2.439)--(4.570,2.597)--(4.620,2.602)%
  --(4.670,2.621)--(4.719,2.629)--(4.769,2.588)--(4.819,2.655)--(4.869,2.701)--(4.918,2.712)%
  --(4.968,2.767)--(5.018,2.810)--(5.068,2.839)--(5.117,2.822)--(5.167,2.900)--(5.217,2.902)%
  --(5.267,2.881)--(5.316,2.942)--(5.366,2.993)--(5.416,2.946)--(5.466,2.946)--(5.515,3.073)%
  --(5.565,3.015)--(5.615,3.067)--(5.665,3.085)--(5.715,3.028)--(5.764,3.094)--(5.814,3.062)%
  --(5.864,3.179)--(5.914,3.401)--(5.963,3.617)--(6.013,3.791)--(6.063,3.899)--(6.113,3.963)%
  --(6.162,4.149)--(6.212,4.429)--(6.262,3.874)--(6.312,3.759)--(6.361,3.658)--(6.411,3.707)%
  --(6.461,3.696)--(6.511,3.753)--(6.560,3.603)--(6.610,3.420)--(6.660,2.839)--(6.710,2.805)%
  --(6.759,2.846)--(6.809,3.241);
\gpcolor{color=gp lt color border}
\draw[gp path] (1.087,4.753)--(1.087,0.967)--(7.058,0.967)--(7.058,4.753)--cycle;
\node[gp node center,rotate=-270.0] at (0.085,2.860) {Avg speedup vs A*};
\node[gp node center] at (4.072,0.172) {Path length};
\gpdefrectangularnode{gp plot 1}{\pgfpoint{1.087cm}{0.967cm}}{\pgfpoint{7.058cm}{4.753cm}}
\end{tikzpicture}
        \caption{Rooms benchmark}
        \label{fig:speeduprooms}
    \end{subfigure}%
    \hspace*{0.5cm}   
    \begin{subfigure}[t]{0.47\textwidth}
        \vskip 0pt
        \begin{tikzpicture}[gnuplot]
\tikzset{every node/.append style={scale=0.80}}
\path (0.000,0.000) rectangle (7.500,5.000);
\gpcolor{color=gp lt color border}
\gpsetlinetype{gp lt border}
\gpsetdashtype{gp dt solid}
\gpsetlinewidth{0.50}
\draw[gp path] (1.381,0.967)--(1.201,0.967);
\node[gp node right] at (1.054,0.967) {$0.02$};
\draw[gp path] (1.381,1.508)--(1.201,1.508);
\node[gp node right] at (1.054,1.508) {$0.04$};
\draw[gp path] (1.381,2.049)--(1.201,2.049);
\node[gp node right] at (1.054,2.049) {$0.06$};
\draw[gp path] (1.381,2.590)--(1.201,2.590);
\node[gp node right] at (1.054,2.590) {$0.08$};
\draw[gp path] (1.381,3.130)--(1.201,3.130);
\node[gp node right] at (1.054,3.130) {$0.1$};
\draw[gp path] (1.381,3.671)--(1.201,3.671);
\node[gp node right] at (1.054,3.671) {$0.12$};
\draw[gp path] (1.381,4.212)--(1.201,4.212);
\node[gp node right] at (1.054,4.212) {$0.14$};
\draw[gp path] (1.381,4.753)--(1.201,4.753);
\node[gp node right] at (1.054,4.753) {$0.16$};
\draw[gp path] (1.381,0.967)--(1.381,0.787);
\node[gp node center] at (1.381,0.541) {$0$};
\draw[gp path] (2.091,0.967)--(2.091,0.787);
\node[gp node center] at (2.091,0.541) {$50$};
\draw[gp path] (2.800,0.967)--(2.800,0.787);
\node[gp node center] at (2.800,0.541) {$100$};
\draw[gp path] (3.510,0.967)--(3.510,0.787);
\node[gp node center] at (3.510,0.541) {$150$};
\draw[gp path] (4.220,0.967)--(4.220,0.787);
\node[gp node center] at (4.220,0.541) {$200$};
\draw[gp path] (4.929,0.967)--(4.929,0.787);
\node[gp node center] at (4.929,0.541) {$250$};
\draw[gp path] (5.639,0.967)--(5.639,0.787);
\node[gp node center] at (5.639,0.541) {$300$};
\draw[gp path] (6.348,0.967)--(6.348,0.787);
\node[gp node center] at (6.348,0.541) {$350$};
\draw[gp path] (7.058,0.967)--(7.058,0.787);
\node[gp node center] at (7.058,0.541) {$400$};
\draw[gp path] (1.381,4.753)--(1.381,0.967)--(7.058,0.967)--(7.058,4.753)--cycle;
\draw[gp path] (5.114,4.081)--(5.114,4.573)--(6.911,4.573)--(6.911,4.081)--cycle;
\node[gp node left] at (5.114,4.327) {JPS4 };
\gpcolor{rgb color={0.000,0.000,0.000}}
\draw[gp path] (5.996,4.327)--(6.764,4.327);
\draw[gp path] (1.523,4.276)--(1.665,1.375)--(1.807,1.103)--(1.949,1.051)--(2.091,1.041)%
  --(2.233,1.054)--(2.374,1.060)--(2.516,1.054)--(2.658,1.076)--(2.800,1.100)--(2.942,1.115)%
  --(3.084,1.104)--(3.226,1.100)--(3.368,1.107)--(3.510,1.154)--(3.652,1.177)--(3.794,1.178)%
  --(3.936,1.191)--(4.078,1.209)--(4.220,1.191)--(4.361,1.202)--(4.503,1.207)--(4.645,1.229)%
  --(4.787,1.253)--(4.929,1.261)--(5.071,1.265)--(5.213,1.259)--(5.355,1.238)--(5.497,1.248)%
  --(5.639,1.279)--(5.781,1.290)--(5.923,1.324)--(6.065,1.333)--(6.206,1.343)--(6.348,1.372)%
  --(6.490,1.384)--(6.632,1.379)--(6.774,1.397)--(6.916,1.402)--(7.058,1.410);
\gpcolor{color=gp lt color border}
\draw[gp path] (1.381,4.753)--(1.381,0.967)--(7.058,0.967)--(7.058,4.753)--cycle;
\node[gp node center,rotate=-270.0] at (0.085,2.860) {Avg speedup vs A*};
\node[gp node center] at (4.219,0.172) {Path length};
\gpdefrectangularnode{gp plot 1}{\pgfpoint{1.381cm}{0.967cm}}{\pgfpoint{7.058cm}{4.753cm}}
\end{tikzpicture}
        \caption{Obstacle free map}
        \label{fig:speedupfree}
    \end{subfigure}%

\caption{Average speedup of JPS4 compared to A* grouped by path length.}
\label{fig:speedupresults} 
\end{figure}
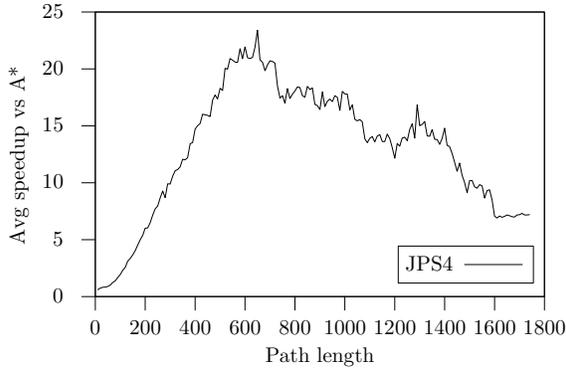
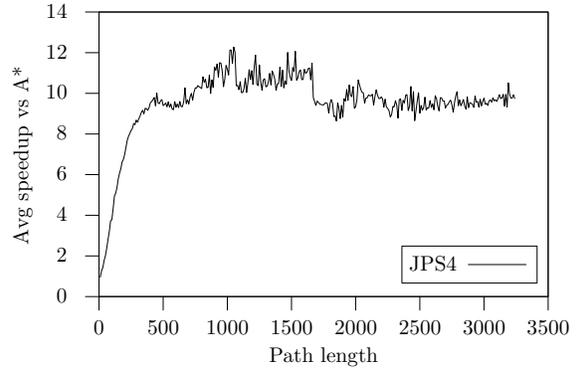
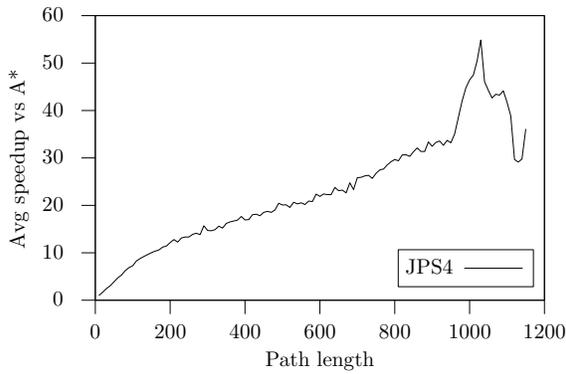
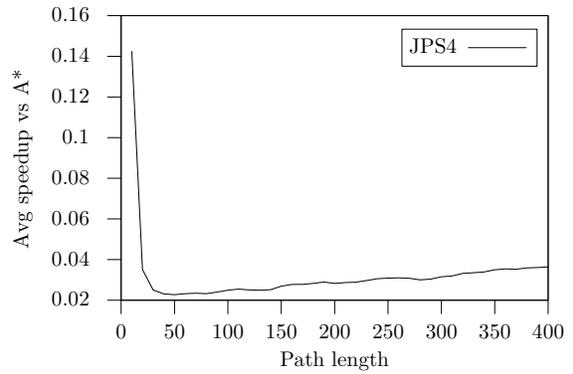

\clearpage

\bibliographystyle{alpha}
\bibliography{sample}

\end{document}